\newcommand{\thickhline}{%
    \noalign {\ifnum 0=`}\fi \hrule height 1pt
    \futurelet \reserved@a \@xhline
}
\newcolumntype{"}{@{\hskip\tabcolsep\vrule width 1pt\hskip\tabcolsep}}
\def \OO {\mathcal{O}}
\def \q {\mathbf{q}}
\def \oo {\mathbf{o}}
\def \q {\mathbf{q}}
\def \LL {\mathcal{L}}
\newtheorem{definition}{Definition}
\newtheorem{thm}{Theorem}
\newtheorem{prop}{Proposition}
\newtheorem{cor}{Corollary}
\ifcvprfinal\pagestyle{empty}\fi
\begin{document}

\title{DR Loss: Improving Object Detection by Distributional Ranking}

\author{Qi Qian$^1$\quad Lei Chen$^2$\quad Hao Li$^2$\quad Rong Jin$^1$\\
Alibaba Group\\
$^1$Bellevue, WA, 98004, USA\\
$^2$Hangzhou, China\\
{\tt\small \{qi.qian, fanjiang.cl, lihao.lh, jinrong.jr\}@alibaba-inc.com}
}

\maketitle
\thispagestyle{empty}

\begin{abstract}
Most of object detection algorithms can be categorized into two classes: two-stage detectors and one-stage detectors. Recently, many efforts have been devoted to one-stage detectors for the simple yet effective architecture. Different from two-stage detectors, one-stage detectors aim to identify foreground objects from all candidates in a single stage. This architecture is efficient but can suffer from the imbalance issue with respect to two aspects: the inter-class imbalance between the number of candidates from foreground and background classes and the intra-class imbalance in the hardness of background candidates, where only a few candidates are hard to be identified. In this work, we propose a novel distributional ranking (DR) loss to handle the challenge. For each image, we convert the classification problem to a ranking problem, which considers pairs of candidates within the image, to address the inter-class imbalance problem. Then, we push the distributions of confidence scores for foreground and background towards the decision boundary. After that, we optimize the rank of the expectations of derived distributions in lieu of original pairs. Our method not only mitigates the intra-class imbalance issue in background candidates but also improves the efficiency for the ranking algorithm. By merely replacing the focal loss in RetinaNet with the developed DR loss and applying ResNet-101 as the backbone, mAP of the single-scale test on COCO can be improved from $39.1\%$ to $41.7\%$ without bells and whistles, which demonstrates the effectiveness of the proposed loss function. Code is available at \url{https://github.com/idstcv/DR_loss}.
\end{abstract}
\section{Introduction}
The performance of object detection has been improved dramatically with the development of deep neural networks in the past few years. 
\begin{figure}[!ht]
\centering
\includegraphics[width=3.2in]{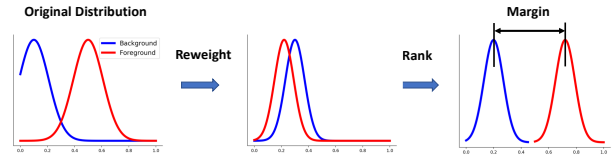}
\caption{Illustration of the proposed distributional ranking loss. First, we push the distributions of confidence scores towards the decision boundary by re-weighting examples. Then, we try to rank the expectation of the derived distribution of foreground above that of background by a large margin.\label{fig:illu}}
\end{figure}
Most of detection algorithms fall into two categories: two-stage detectors~\cite{DaiQXLZHW17,HeGDG17,HeZRS16,LinDGHHB17} and one-stage detectors~\cite{ChenLLSWD0HZ19,LawD18,LinGGHD17,LiuAESRFB16,RedmonF17,tian2019fcos, ZhuHS19}. For the two-stage schema, the procedure of the algorithms can be divided into two parts. 
In the first stage, a region proposal method filters most of background candidate bounding boxes and keeps only a small set of candidates. In the following stage, these candidates are classified as specific foreground classes or background and the bounding boxes will be further refined by minimizing a regression loss. Two-stage detectors demonstrate the superior performance on real-world data sets while the efficiency can be an issue in practice, especially for the devices with limited computing resources, e.g., smartphones, cameras, etc.

Therefore, one-stage detectors are developed for the efficient detection. Different from two-stage detectors, one-stage methods consist of a single phase and have to identify foreground objects from all candidates directly. The procedure of one-stage detectors is straightforward and efficient. However, one-stage detectors can suffer from the imbalance problem that can reside in the following two aspects. First, the numbers of candidates between classes are imbalanced. Without a region proposal phase, the number of background candidates can easily overwhelm that of foreground ones. Second, the hardness of identification for background candidates is imbalanced. Most of them can be easily identified from foreground objects while only a few of them are hard to be classified.

To mitigate the imbalance problem, SSD~\cite{LiuAESRFB16} adopts hard negative mining in training, which is a popular strategy~\cite{ShrivastavaGG16,ViolaJ01} to keep a small set of background candidates with the highest loss. By eliminating simple background candidates, the strategy balances the number of candidates between classes and the hardness of background simultaneously. However, certain information from background can be lost, and thus the detection performance can degrade as illustrated in \cite{LinGGHD17}. RetinaNet~\cite{LinGGHD17} proposes to keep all background candidates but assign different weights for the loss functions of candidates. The weighted cross entropy loss is referred as focal loss. It makes the algorithm focus on the hard candidates while reserving the information from all candidates. This strategy improves the performance of one-stage detectors significantly. Despite the success of focal loss, it re-weights classification losses in a heuristic way and can be insufficient to address the imbalance problem. Moreover, focal loss is designed for a single \textit{candidate} and is image-independent while object detection aims to identify objects in a single \textit{image}. Focal loss lacks the exploration for each image as a whole and the inconsistency can make the performance suboptimal.


In this work, we propose an image-dependent ranking loss to handle the imbalance challenge. First, to mitigate the effect of the inter-class imbalance problem, we convert the classification problem to a ranking problem, which considers ranks of pairs. Since each pair consists of a foreground candidate and a background candidate, it is well balanced. Moreover, considering the intra-class imbalance in hardness of background candidates, we design the distributional ranking (DR) loss to rank the distribution of confidence scores for foreground above that for background candidates. As illustrated in Fig.~\ref{fig:illu}, we first push the original distributions towards the decision boundary with appropriate constraints. After obtaining the drifted distributions, we can rank the expectations of distributions in lieu of original examples to identify foreground from background, which improves the efficiency by reducing the number of pairs from $\OO(n^2)$ to $\OO(1)$ in ranking, where $n$ is the number of candidates in an image. Compared with focal loss, DR loss is image-dependent and can explore the information within each image sufficiently.

We conduct experiments on COCO~\cite{LinMBHPRDZ14} to demonstrate the proposed DR loss. Since the focal loss is designed as the classification loss in RetinaNet, we adopt RetinaNet as the base detector for a fair comparison. Specifically, we merely replace the focal loss with the DR loss while keeping other components unchanged. With ResNet-101~\cite{HeZRS16} as the backbone, minimizing our loss function can boost the mAP of RetinaNet from $39.1\%$ to $41.7\%$, which confirms the effectiveness of the proposed loss.

The rest of this paper is organized as follows. Section \ref{sec:related} reviews the related work in object detection. Section \ref{sec:method} describes the details of the proposed DR loss. Section \ref{sec:exp} compares our method to others on COCO detection task. Finally, Section \ref{sec:conclusion} concludes this work.

\section{Related Work}
\label{sec:related}
Detection is a fundamental task in computer vision. In conventional methods, hand crafted features, e.g., HOG~\cite{DalalT05} and SIFT~\cite{Lowe04}, are used for detection either with a sliding-window strategy which holds a dense set of candidates, e.g., DPM~\cite{FelzenszwalbMR08} or with a region proposal method which keeps a sparse set of candidates, e.g., Selective Search~\cite{UijlingsSGS13}. Recently, deep neural networks have shown the dominating performance in classification tasks~\cite{KrizhevskySH12}, and the features obtained from neural networks are leveraged for detection tasks. 

R-CNN~\cite{GirshickDDM14} equips the region proposal stage and works as a two-stage algorithm. It first obtains a sparse set of regions by selective search. In the next stage, a deep convolutional neural network is applied to extract features for each region. Finally, regions are classified with a conventional classifier, e.g., SVM. R-CNN improves the performance of detection by a large margin but the procedure is too slow for real-world applications. Hence, many variants are developed to accelerate it~\cite{Girshick15,RenHGS15}. To further improve the accuracy, Mask-RCNN~\cite{HeGDG17} adds a branch for object mask prediction to boost the performance with the additional information from multi-task learning. Besides the two-stage structure, Cascade R-CNN~\cite{CaiV18} develops a multi-stage strategy to promote the quality of detectors after the region proposal stage in a cascade fashion.

One-stage detectors are developed for efficiency~\cite{ChenLLSWD0HZ19,LiuAESRFB16,RedmonDGF16,SermanetEZMFL13,ZhuHS19}. Since there is no region proposal phase to sample background candidates, one-stage detectors can suffer from the imbalance issue from both the inter-class imbalance between foreground and background candidates and intra-class imbalance in the background candidates. To address the challenge, SSD~\cite{LiuAESRFB16} adopts hard negative mining, which only keeps a small set of hard background candidates for training. Recently, focal loss~\cite{LinGGHD17} is proposed to handle the problem in RetinaNet. Unlike SSD, it keeps all background candidates but re-weights them such that the hard examples are assigned with a large weight. Focal loss improves the performance of one-stage detection explicitly, but the imbalance problem in detection is still not sufficiently explored. Besides those anchor-based algorithms, anchor-free one-stage detectors~\cite{tian2019fcos,ZhuHS19} have been developed, where focal loss is also applied for classification. The work closest to ours is the AP-loss in \cite{ChenLLSWD0HZ19}, where a ranking loss is designed to optimize the average precision. However, the loss focuses on the original pairs and is non-differentiable. A specific algorithm has to be developed to minimize the AP-loss. In this work, we develop the DR loss that ranks the expectations of distributions in lieu of original pairs. DR loss is differentiable and can be optimized with stochastic gradient descent (SGD) in the standard training pipeline. Therefore, our loss can work in a plug and play manner, which is important for real-world applications.

\section{DR Loss}
\label{sec:method}

Given a set of candidate bounding boxes from an image, a detector has to identify the foreground objects from background ones with a classification model. Let $\theta$ denote a classifier and it can be learned by optimizing the problem
\begin{eqnarray}\label{eq:origin}
\min\limits_{\theta} \sum_i^N\sum_{j,k} \ell(p_{i,j,k})
\end{eqnarray}
where $N$ is the number of total images. In this work, we employ sigmoid function to predict the probability for each candidate. $p_{i,j,k}$ is the prediction from $\theta$ and indicates the estimated probability that the $j$-th candidate in the $i$-th image is from the $k$-th class. $\ell(\cdot)$ is the loss function. In most of detectors, the classifier is learned by minimizing the cross entropy loss or its variants. 

The objective in Eqn.~\ref{eq:origin} is prevalent but can suffer from the inter-class imbalance problem. The problem can be demonstrated by rewriting the original problem as
\begin{eqnarray}\label{eq:imba}
\min_{\theta} \sum_i^N(\sum_{j_+}^{n_+} \ell(p_{i,j_+})+\sum_{j_-}^{n_-}\ell(p_{i,j_-}))
\end{eqnarray}
where $j_+$ and $j_-$ denote the positive (i.e., foreground) and negative (i.e., background) examples (e.g., anchors), respectively. $n_+$ and $n_-$ are the corresponding number of examples. When $n_-\gg n_+$, the accumulated loss from the latter term will dominate. This issue is from the fact that the losses for positive and negative examples are separated and the contribution from positive examples will be overwhelmed by negative ones. One heuristic to handle the problem is emphasizing positive examples, which can change the weights for the corresponding losses. In this work, we aim to address the problem in a fundamental way. For brevity, we will omit the index of image (i.e., $i$) from the next subsection.

\subsection{Ranking}
\label{sec:orank}

To mitigate the challenge from the imbalance between classes, we consider to optimize the rank between positive and negative examples. Given a pair of positive and negative examples, an ideal ranking model can rank the positive example above the negative one with a large margin
\begin{eqnarray*}
\forall j_+, j_-\quad p_{j_+}-p_{j_-}\geq \gamma
\end{eqnarray*}
where $\gamma$ is a non-negative constant. Compared with the objective in Eqn.~\ref{eq:origin}, the ranking model optimizes the relationship between individual positive and negative examples, which is well balanced. 

The objective of ranking for a single image can be written as
\begin{eqnarray}\label{eq:rank}
\min_\theta \sum_{j_+}^{n_+}\sum_{j_-}^{n_-} \ell(p_{j_-} - p_{j_+}+\gamma)
\end{eqnarray}
where the hinge loss is applied as the loss function
\[\ell_{\mathrm{hinge}}(z)=[z]_+=\left\{\begin{array}{ll}z&z> 0\\ 0&o.w.\end{array}\right.\]
The objective can be interpreted by the equivalent form
\begin{align}\label{eq:exp}
&\frac{1}{n_+n_-}\sum_{j_+}^{n_+}\sum_{j_-}^{n_-} \ell(p_{j_-} - p_{j_+}+\gamma) \nonumber\\
&= E_{j_+,j_-}[\ell(p_{j_-} - p_{j_+}+\gamma)]
\end{align}
It demonstrates that the objective measures the expectation of the ranking loss on a randomly sampled pair. 

The ranking loss addresses the inter-class imbalance issue by comparing the rank of each positive example to negative examples. However, it ignores a phenomenon in object detection, where the hardness of negative examples is also imbalanced. Besides, the ranking loss introduces a new challenge, that is, the vast number of pairs. We tackle them in the following subsection. 

\subsection{Distributional Ranking}

As indicated in Eqn.~\ref{eq:exp}, the ranking loss in Eqn.~\ref{eq:rank} punishes a mis-ranking for a uniformly sampled pair. In detection, most of negative examples can be easily ranked well, that is, a randomly sampled pair will not incur the ranking loss with high probability. Therefore, we consider to optimize the ranking boundary to avoid the trivial solution
\begin{eqnarray}\label{eq:bound}
\min_\theta  \ell(\max_{j_-} \{p_{j_-}\} - \min_{j_+}\{p_{j_+}\}+\gamma)
\end{eqnarray}
If we can rank the positive example with the lowest score above the negative one with the highest confidence, the whole set of examples in an image are perfectly ranked. The pair in Eqn.~\ref{eq:bound} is referred as the worst-case scenario, which will incur the largest loss among all pairs. Compared with the conventional ranking loss, optimizing the loss from the worst-case scenario is much more efficient, which reduces the number of pairs from $n_+n_-$ to $1$. Moreover, it clearly eliminates the inter-class imbalance issue since only a single pair of positive and negative examples is required for each image. However, this formulation is very sensitive to the selected pair, which can result in the degraded detection model.

To improve the robustness, we first introduce the distribution of confidence scores for the positive and negative examples and obtain the expectation as
\begin{eqnarray*}
\quad P_{+}= \sum_{j_+}^{n_+}q_{j_+}p_{j_+};\quad P_{-} =  \sum_{j-}^{n_-}q_{j_-}p_{j_-}
\end{eqnarray*}
where $\q_{+}\in\Delta$ and $\q_{-}\in\Delta$ denote the distributions over positive and negative examples, respectively. $P_{+}$ and $P_{-}$ represent the expected scores under the corresponding distribution. $\Delta$ is the simplex as $\Delta=\{q: \sum_j q_j=1,\forall j, q_j\geq 0 \}$. When $\q_{+}$ and $\q_{-}$ are the uniform distribution, $P_{+}$ and $P_{-}$ demonstrate the expectation from the original distribution. 

With these definitions, the distribution corresponding to the worst-case scenario can be derived as
\[ P_{+}= \min_{\q_{+}\in\Delta}\sum_{j_+}^{n_+}q_{j_+}p_{j_+};\ \ P_{-} =  \max_{\q_{-}\in\Delta}\sum_{j-}^{n_-}q_{j_-}p_{j_-}\]
We can rewrite the problem in Eqn.~\ref{eq:bound} in the equivalent form
\[\min_\theta  \ell(P_{-}-P_{+}+\gamma)\]
which can be considered as ranking the distributions between positive and negative examples in the worst-case scenario. 

By investigating the new formulation, it is obvious that optimizing the worst-case scenario is not robust due to the fact that the domain of the generated distribution is unconstrained. Consequently, it will concentrate on a single example while ignoring the influence of the original distribution that contains massive information. Hence, we improve the robustness of the loss by regularizing the freedom of the derived distribution
\begin{eqnarray*}
P_{-} &=& \max_{\q_{-}\in\Delta,\Omega(\q_{-}||\oo_-)\leq \epsilon_-} \sum_{j-}^{n_-}q_{j_-}p_{j_-} \\
-P_{+}&=& \max_{\q_{+}\in\Delta,\Omega(\q_{+}||\oo_+)\leq \epsilon_+} \sum_{j_+}^{n_+}q_{j_+}(-p_{j_+})
\end{eqnarray*}
where $\oo_+, \oo_-$ denote the original distributions for positive and negative examples, respectively. $\Omega(\cdot)$ is a regularizer for the diversity of the distribution to prevent the distribution from the trivial one-hot solution. It measures the similarity between the generated distribution and the original distribution, and some popular similarity function can be applied, e.g., $L_p$ distance, R\'{e}nyi entropy, Shannon entropy, etc. $\epsilon_-$ and $\epsilon_+$ are constants to control the freedom of derived distributions.

To obtain the constrained distribution, we consider the subproblem
\begin{eqnarray*}
&&\max_{\q_{-}\in\Delta} \sum_{j-}q_{j_-}p_{j_-}\\
s.t.&&\Omega(\q_{-}||\oo_-)\leq \epsilon_-
\end{eqnarray*}
According to the dual theory~\cite{boyd2004convex}, given $\epsilon_-$, we can find the parameter $\lambda_-$ to obtain the optimal $\q_{-}$ by solving the problem
\begin{eqnarray*}
\max_{\q_{-}\in\Delta} \sum_{j-}q_{j_-}p_{j_-} -\lambda_-\Omega(\q_{-}||\oo_-)
\end{eqnarray*}
We observe that the former term is linear in $\q_{-}$. Hence, if $\Omega(\cdot)$ is convex in $\q_{-}$, the problem can be solved efficiently by first order algorithms~\cite{boyd2004convex}. In this work, we adopt KL-divergence as the regularizer and have the closed-form solution as follows
\begin{prop}\label{prop:1}
For the problem
\begin{eqnarray*}
\max_{\q_{-}\in\Delta} \sum_{j-}q_{j_-}p_{j_-} -\lambda_-\mathrm{KL}(\q_{_-}||\oo_-)
\end{eqnarray*}
we have the closed-form solution as
\[q_{j_-} = \frac{1}{Z_-}o_{j_-}\exp(\frac{p_{j_-}}{\lambda_-});\quad Z_- = \sum_{j_-}o_{j_-}\exp(\frac{p_{j_-}}{\lambda_-})\]
\end{prop}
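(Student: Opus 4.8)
The plan is to recognize this as a standard concave maximization over the probability simplex and to solve it with the method of Lagrange multipliers. First I would write out the regularizer explicitly as $\mathrm{KL}(\q_-||\oo_-)=\sum_{j_-}q_{j_-}\log(q_{j_-}/o_{j_-})$, so that the objective becomes $\sum_{j_-}q_{j_-}p_{j_-}-\lambda_-\sum_{j_-}q_{j_-}\log(q_{j_-}/o_{j_-})$. Since the entropy-type term $\sum q\log(q/o)$ is convex in $\q_-$ and $\lambda_->0$, the objective is concave, and the feasible set $\Delta$ is convex; hence the problem has a unique maximizer and the first-order (KKT) conditions are both necessary and sufficient. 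As the excerpt already notes, the linearity of the first term plus convexity of $\Omega$ is exactly what makes this tractable.

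Next I would introduce a single Lagrange multiplier $\mu$ for the equality constraint $\sum_{j_-}q_{j_-}=1$ and, for the moment, set aside the inequalities $q_{j_-}\ge0$ (to be verified as inactive afterward). Differentiating the Lagrangian coordinatewise and setting the result to zero gives $p_{j_-}-\lambda_-(\log(q_{j_-}/o_{j_-})+1)-\mu=0$. Solving this stationarity equation for $q_{j_-}$ yields $q_{j_-}=o_{j_-}\exp(p_{j_-}/\lambda_-)\cdot C$, where $C=\exp(-\mu/\lambda_- - 1)$ collects the multiplier-dependent factor common to every coordinate.

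Then I would pin down the constant $C$ by imposing normalization $\sum_{j_-}q_{j_-}=1$, which forces $C=1/Z_-$ with $Z_-=\sum_{j_-}o_{j_-}\exp(p_{j_-}/\lambda_-)$; this reproduces exactly the claimed closed form. Finally I would revisit the omitted nonnegativity constraints: because each $o_{j_-}\ge0$ and the exponential factor is strictly positive, every $q_{j_-}$ is automatically nonnegative, so these inequalities are inactive and the stationary point found above is genuinely the constrained optimum.

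I do not anticipate a substantive obstacle, as the derivation is essentially routine bookkeeping of the Lagrangian. The only points that demand a little care are ensuring $\lambda_->0$, so that concavity (and hence sufficiency of the first-order conditions) actually holds, and the implicit support condition that $o_{j_-}>0$ wherever we want $q_{j_-}>0$, so that the KL divergence and its gradient are well defined.
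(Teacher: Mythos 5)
Your proof is correct and follows exactly the route the paper indicates: the paper's proof is the one-line remark that the result follows from the K.K.T. conditions, and your Lagrangian derivation (stationarity, normalization to fix the multiplier, and the observation that nonnegativity is automatic from the exponential form) is precisely the fleshed-out version of that argument. No gaps; the care you note about $\lambda_->0$ and the support of $\oo_-$ is appropriate but does not change the substance.
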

\begin{proof}
It can be proved directly from K.K.T. condition~\cite{boyd2004convex}.
\end{proof}
For the distribution over positive examples, we have the similar result as
\begin{prop}
For the problem
\begin{eqnarray*}
\max_{\q_{+}\in\Delta} \sum_{j+}q_{j_+}(-p_{j_+}) -\lambda_+\mathrm{KL}(\q_{+}||\oo_+)
\end{eqnarray*}
we have the closed-form solution as
\[q_{j_+} = \frac{1}{Z_+}o_{j_+}\exp(\frac{-p_{j_+}}{\lambda_+});\quad Z_+ = \sum_{j_+}o_{j_+}\exp(\frac{-p_{j_+}}{\lambda_+})\]
\end{prop}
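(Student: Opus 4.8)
The plan is to reuse essentially verbatim the K.K.T. argument that establishes Proposition~\ref{prop:1}, since the present objective is obtained from the previous one by substituting the linear coefficients $p_{j_-}$ with $-p_{j_+}$; the underlying structure — a term linear in the distribution plus a negative KL-divergence, maximized over the simplex $\Delta$ — is unchanged. First I would note that the objective $\sum_{j_+} q_{j_+}(-p_{j_+}) - \lambda_+\mathrm{KL}(\q_+||\oo_+)$ is concave in $\q_+$, because the first term is affine and $-\mathrm{KL}(\q_+||\oo_+)$ is concave for $\lambda_+ > 0$. Consequently any point that is stationary for the Lagrangian and feasible for the simplex constraints is automatically the global maximizer, so it suffices to exhibit one such point.

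Next I would form the Lagrangian associated with the single equality constraint $\sum_{j_+} q_{j_+} = 1$, introducing a multiplier $\nu$ and writing the divergence explicitly as $\mathrm{KL}(\q_+||\oo_+) = \sum_{j_+} q_{j_+}\log(q_{j_+}/o_{j_+})$. Differentiating with respect to each coordinate $q_{j_+}$ and setting the result to zero yields the stationarity condition, which rearranges to $\log(q_{j_+}/o_{j_+}) = (-p_{j_+} - \nu - \lambda_+)/\lambda_+$. Exponentiating gives $q_{j_+} \propto o_{j_+}\exp(-p_{j_+}/\lambda_+)$, where the remaining $j_+$-independent factor is pinned down by the normalization constraint to equal $1/Z_+$, producing exactly the claimed form with $Z_+ = \sum_{j_+} o_{j_+}\exp(-p_{j_+}/\lambda_+)$.

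The only point I would flag as requiring care — the nominal ``hard part,'' though it is truly routine here — is justifying that the nonnegativity constraints $q_{j_+} \geq 0$ are inactive and therefore do not perturb the closed-form solution. For this I would invoke complementary slackness: since $o_{j_+} \geq 0$ and the exponential factor is strictly positive, the candidate solution is automatically nonnegative, so the corresponding multipliers vanish and omitting those constraints from the Lagrangian is legitimate. Combined with the concavity established at the outset, this confirms that the stationary point is the unique global optimum, closing the argument in a manner strictly parallel to Proposition~\ref{prop:1}.
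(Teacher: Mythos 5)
Your proof is correct and follows essentially the same route as the paper, which simply invokes the K.K.T. conditions (exactly as for Proposition~1) without writing out the details. You have merely made explicit what the paper leaves implicit: the Lagrangian stationarity condition, the normalization via $Z_+$, concavity guaranteeing global optimality, and the observation that the nonnegativity constraints are automatically satisfied.
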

\paragraph{Remark 1}
These Propositions show that the harder the example, the larger the weight of the example. Besides, the weight is image-dependent and will be affected by other examples in the same image.

The original distributions (i.e., $\oo_-$ and $\oo_+$) can also influence the derived distributions by weighting each candidate. Therefore, the prior knowledge about the problem can be encoded into the original distributions, which makes generating new distributions more flexible. Here we take $\oo_-$ as an example to illustrate different distributions.

\begin{compactitem}
\item Uniform distribution: It means that $\forall j,\ o_{j-} = 1/n_-$. With the constant value, the closed-form solution can be simplified as 
$q_{j_-} = \frac{1}{Z_-}\exp(\frac{p_{j_-}}{\lambda_-});\quad Z_- = \sum_{j_-}\exp(\frac{p_{j_-}}{\lambda_-})$
\item Hard negative mining: In this scenario, we assume $\forall j, \ o_{j-}\in\{0,1/\hat{n}_{-}\}$, where $\hat{n}_-$ denotes the number of non-zero elements in $\oo_-$. According to Proposition~\ref{prop:1}, only candidates selected by $\oo_-$ will be accumulated to derive the new distribution. Therefore, our formulation can incorporate with hard negative mining by setting the weights in $\oo_-$ appropriately.
\end{compactitem}

To keep the loss function simple, we adopt the uniform distribution in this work. Fig.~\ref{fig:ndist} illustrates the changing of the distribution with the proposed strategy. The derived distribution approaches the distribution corresponding to the worst-case scenario when decreasing $\lambda$. Note that the original distributions in Fig.~\ref{fig:ndist} (a) and Fig.~\ref{fig:ndist} (b) have the same mean but different variance. For the distribution with the small variance as in Fig.~\ref{fig:ndist} (a), we can observe that the regularizer $\lambda$ should be small to change the distribution effectively. When the distribution has the large variance, Fig.~\ref{fig:ndist} (b) shows that a large $\lambda$ is sufficient to change the shape of the distribution dramatically. Considering that the distributions of positive and negative examples have different variances,  Fig.~\ref{fig:ndist} implies that different weights for the regularizers should be assigned.

\begin{figure}[!ht]
\centering
\begin{minipage}{1.6in}
\centering
\includegraphics[width=1.5in]{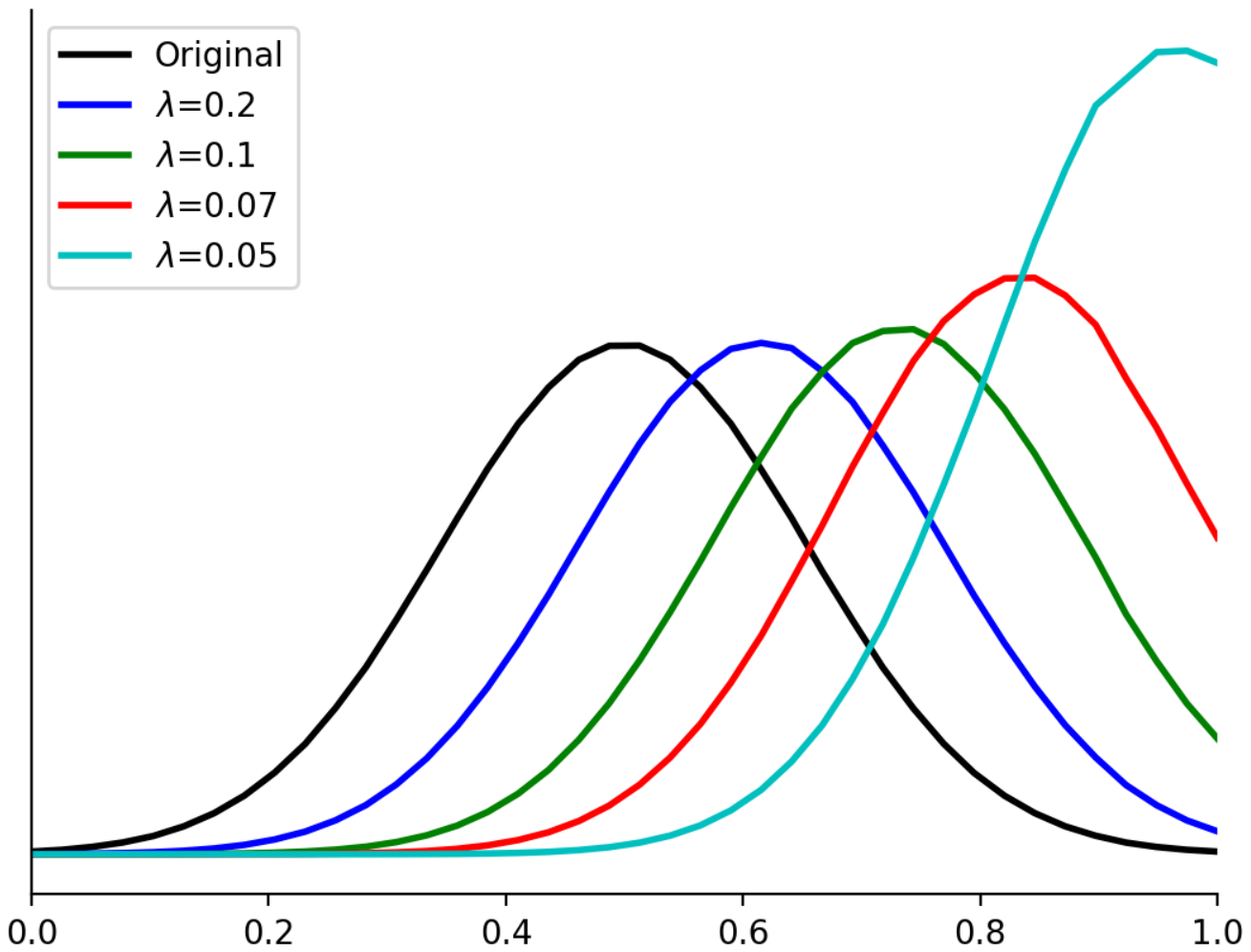}
\mbox{\footnotesize (a) Small Variance}
\end{minipage}
\begin{minipage}{1.6in}
\centering
\includegraphics[width=1.5in]{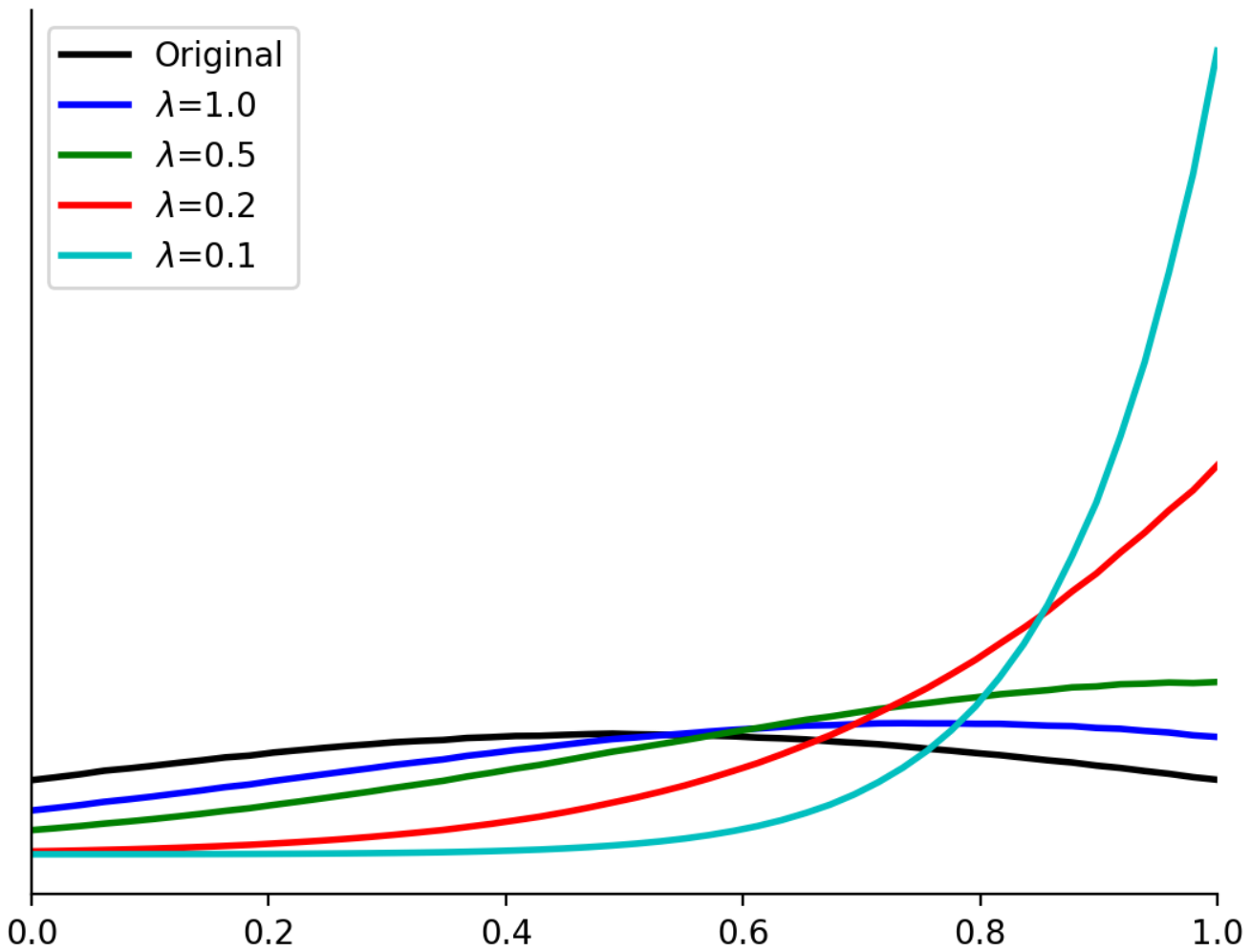}
\mbox{\footnotesize (b) Large Variance}
\end{minipage}
\caption{Illustration of the drifting in the distribution. We randomly sample $1e7$ points from a Gaussian distribution with different variances to mimic scores of anchors. We change the weights of examples according to the proposed strategy as in Proposition~\ref{prop:1} and then plot the curves of different probability density functions (PDF) when varying $\lambda$.\label{fig:ndist}}
\end{figure}

With the closed-form solutions of distributions, the expectation of distributions can be computed as
\begin{align}\label{eq:mexample}
&\hat{P}_{-} = \sum_{j-}^{n_-}q_{j_-}p_{j_-}=\sum_{j-}^{n_-}\frac{1}{Z_-}\exp(\frac{p_{j_-}}{\lambda_-})p_{j_-} \\
&\hat{P}_{+}=\sum_{j-}^{n_-}q_{j_+}p_{j_+}= \sum_{j_+}^{n_+}\frac{1}{Z_+}\exp(\frac{-p_{j_+}}{\lambda_+})p_{j_+}\nonumber
\end{align}

Finally, smoothness is crucial for the convergence of non-convex optimization~\cite{GhadimiL13a}. So we apply the smooth approximation instead of the original hinge loss as the loss function for pairs. The popular substitutes to the hinge loss include quadratic loss and logistic loss
\begin{eqnarray}\label{eq:qloss}
\ell_{\mathrm{quad}}(z) = \left\{\begin{array}{cc}z&z\geq \rho\\ \frac{(z+\rho)^2}{4\rho}&-\rho<z<\rho\\0&z\leq -\rho\end{array}\right.
\end{eqnarray}
\begin{eqnarray}\label{eq:Lloss}
\ell_{\mathrm{logistic}}(z) = \frac{1}{L}\log(1+\exp(Lz))
\end{eqnarray}
where $\rho$ and $L$ control the approximation error of the function. The larger the $L$ is , the closer to the hinge loss the approximation is. $\rho$ works in an opposite direction. Fig.~\ref{fig:lossC} compares the hinge loss to its smooth variants. Explicitly, these functions share the similar shape and we adopt the logistic loss in this work.

\begin{figure}[!ht]
\centering
\begin{minipage}{1.6in}
\centering
\includegraphics[width=1.5in]{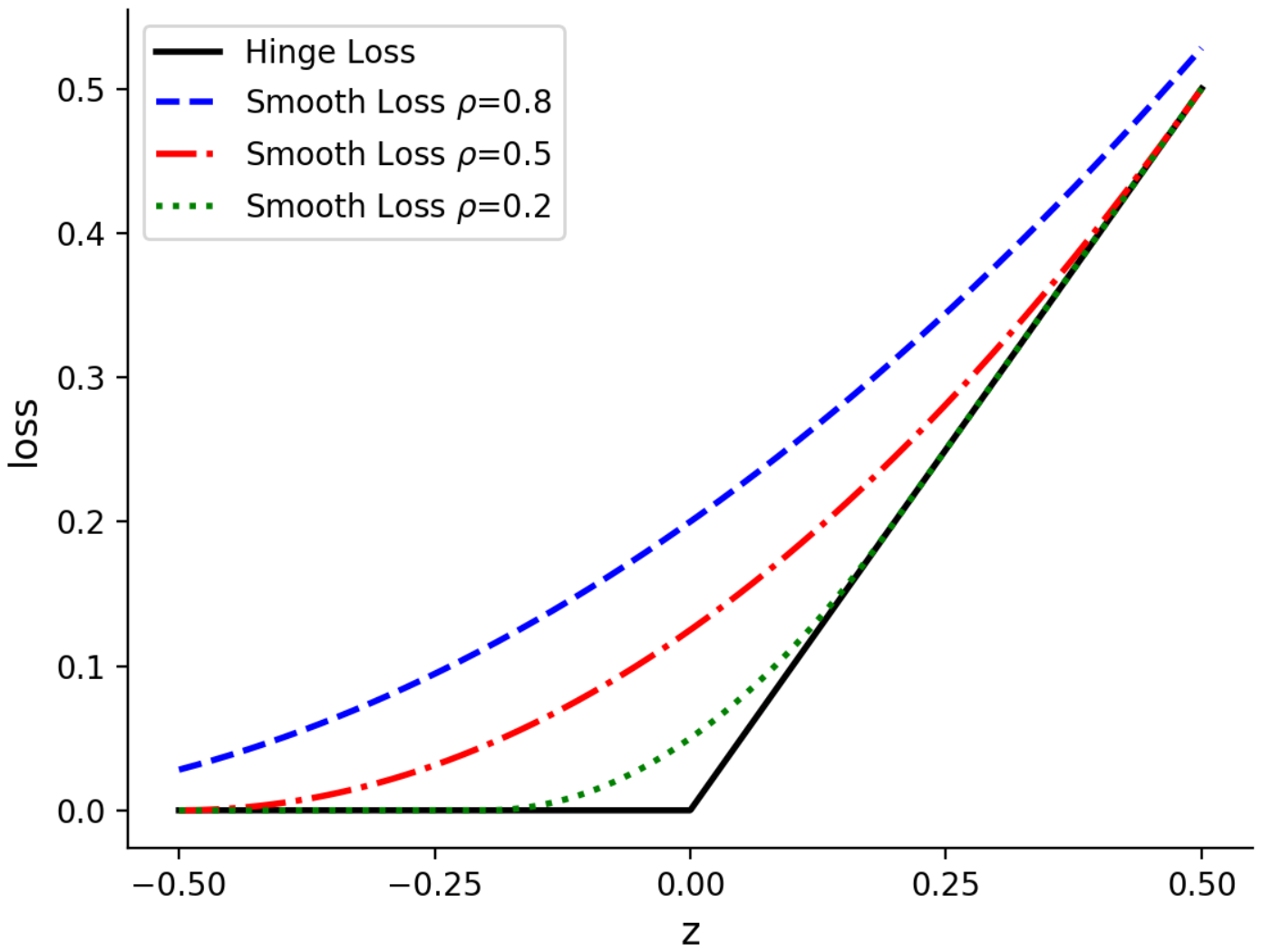}
\mbox{\footnotesize (a) Quadratic Loss}
\end{minipage}
\begin{minipage}{1.6in}
\centering
\includegraphics[width=1.5in]{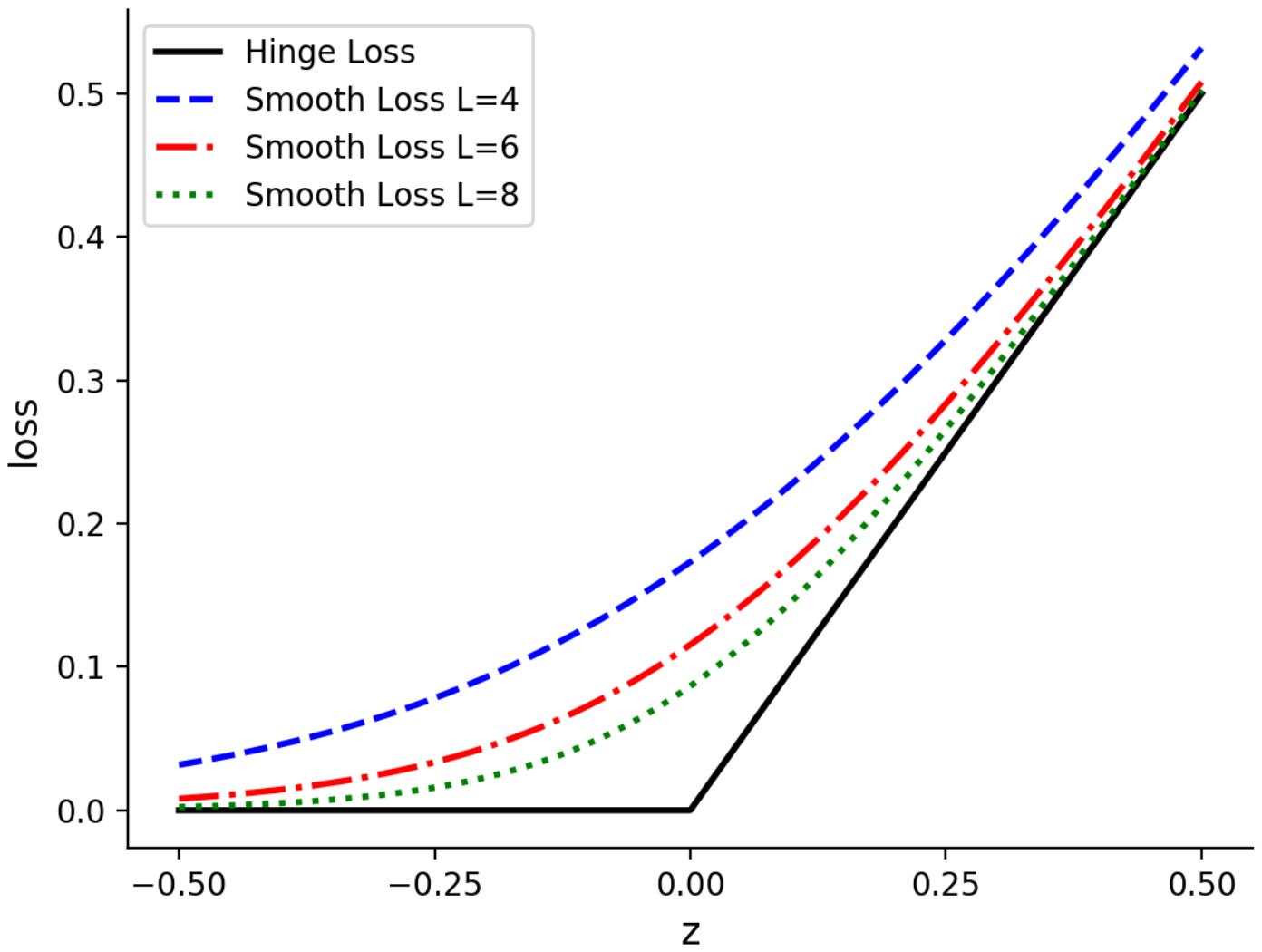}
\mbox{\footnotesize (b) Logistic Loss}
\end{minipage}
\caption{Illustration of the hinge loss and its smooth variants.\label{fig:lossC}}
\end{figure}

Incorporating all of these components, our distributional ranking loss can be defined as
\begin{eqnarray}\label{eq:final}
\min_\theta \mathcal{L}_{\mathrm{DR}}(\theta)=\sum_i^N\ell_{\mathrm{logistic}}(\hat{P}_{i,-}-\hat{P}_{i,+}+\gamma)
\end{eqnarray}
where $\hat{P}_{i,-}$ and $\hat{P}_{i,+}$ are given in Eqn.~\ref{eq:mexample} and $\ell_{\mathrm{logistic}}(\cdot)$ is in Eqn.~\ref{eq:Lloss}. If there is no positive examples in an image, we will let $\hat{P}_{i,+}=1$. Compared with the conventional ranking loss, we rank the expectations of two distributions. It shrinks the number of pairs to $1$ that leads to the efficient optimization.

The gradient of the objective in Eqn.~\ref{eq:final} is easy to compute. The detailed calculation of the gradient can be found in the appendix.

If optimizing the DR loss by the standard SGD with mini-batch as $\theta_{t+1} = \theta_t -\eta \frac{1}{m}\sum_{s=1}^m \nabla \ell_t^s$, we can show that it can converge as in the following theorem. The norm of the gradient is applied to measure the convergence, which is a standard criterion for non-convex optimization~\cite{GhadimiL13a}.  The detailed proof is cast to the appendix.
\begin{thm}\label{thm:1}
Let $\theta_t$ denote the model obtained from the $t$-th iteration with SGD optimizer and the size of mini-batch is $m$. If we assume the objective $\mathcal{L}$ in Eqn.~\ref{eq:final} is $\mu$-smoothness and the variance of the gradient is bounded as $\forall s, \|\nabla \ell_t^s-\nabla\mathcal{L}_t\|_F\leq \delta$, when setting the learning rate as $\eta = \frac{\sqrt{2m\mathcal{L}(\theta_0)}}{\delta\sqrt{\mu T}}$ and $\eta\leq \frac{1}{\mu}$, we have
\[\frac{1}{T}\sum_t \|\nabla \mathcal{L}(\theta_t)\|_F^2 \leq \frac{2\delta\sqrt{2\mu}}{\sqrt{mT \mathcal{L}(\theta_0)}}\]
\end{thm}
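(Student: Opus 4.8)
The plan is to follow the standard analysis of SGD on a smooth non-convex objective, as in the cited Ghadimi--Lan framework. The only structural facts I need are that $\mathcal{L}$ is $\mu$-smooth, that the mini-batch gradient $g_t = \frac{1}{m}\sum_{s=1}^m \nabla\ell_t^s$ is an unbiased estimate of $\nabla\mathcal{L}(\theta_t)$ formed from independent samples, and that each sample gradient satisfies $\|\nabla\ell_t^s - \nabla\mathcal{L}_t\|_F \leq \delta$. The starting point is the descent lemma for a $\mu$-smooth function,
\[
\mathcal{L}(\theta_{t+1}) \leq \mathcal{L}(\theta_t) + \langle \nabla\mathcal{L}(\theta_t), \theta_{t+1}-\theta_t\rangle + \frac{\mu}{2}\|\theta_{t+1}-\theta_t\|_F^2,
\]
into which I substitute the update $\theta_{t+1}-\theta_t = -\eta g_t$.

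After substitution I would take the expectation conditioned on $\theta_t$. Unbiasedness turns the inner-product term into $-\eta\|\nabla\mathcal{L}(\theta_t)\|_F^2$, and for the quadratic term I use the bias--variance split $\mathbb{E}\|g_t\|_F^2 = \|\nabla\mathcal{L}(\theta_t)\|_F^2 + \mathbb{E}\|g_t-\nabla\mathcal{L}_t\|_F^2$. The per-sample bound together with independence gives the variance-reduction estimate $\mathbb{E}\|g_t-\nabla\mathcal{L}_t\|_F^2 \leq \delta^2/m$, since the cross terms vanish in expectation and each of the $m$ terms contributes at most $\delta^2/m^2$. Collecting terms yields
\[
\mathbb{E}[\mathcal{L}(\theta_{t+1})] \leq \mathcal{L}(\theta_t) - \eta\Big(1-\tfrac{\mu\eta}{2}\Big)\|\nabla\mathcal{L}(\theta_t)\|_F^2 + \frac{\mu\eta^2\delta^2}{2m}.
\]
Invoking $\eta\leq 1/\mu$ gives $1-\mu\eta/2 \geq 1/2$, so the coefficient of the gradient term is at least $\eta/2$.

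I would then rearrange to isolate $\|\nabla\mathcal{L}(\theta_t)\|_F^2$, sum over $t=0,\dots,T-1$, and telescope the loss differences. Using $\mathcal{L}\geq 0$ to drop the terminal term $\mathbb{E}[\mathcal{L}(\theta_T)]$, this produces
\[
\frac{1}{T}\sum_t \|\nabla\mathcal{L}(\theta_t)\|_F^2 \leq \frac{2\mathcal{L}(\theta_0)}{\eta T} + \frac{\mu\eta\delta^2}{m}.
\]
The final step is to substitute the prescribed learning rate $\eta = \frac{\sqrt{2m\mathcal{L}(\theta_0)}}{\delta\sqrt{\mu T}}$, which is precisely the value that balances the two terms on the right, and to simplify so as to obtain the claimed $\OO(1/\sqrt{mT})$ bound.

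The main obstacle is the variance step rather than the algebra: extracting the $1/m$ factor from the deterministic per-sample bound relies on the mini-batch being formed from independent unbiased draws, so that $\mathbb{E}\langle \nabla\ell_t^s-\nabla\mathcal{L}_t,\, \nabla\ell_t^{s'}-\nabla\mathcal{L}_t\rangle = 0$ for $s\neq s'$; this is what converts a bound on individual deviations into the reduced variance of the average, and hence into the $\sqrt{m}$ improvement in the rate. A secondary point worth checking is that the $\mu$-smoothness hypothesis is legitimate for $\mathcal{L}_{\mathrm{DR}}$, i.e.\ that the logistic loss composed with the softmax-weighted expectations $\hat{P}_{i,\pm}$ genuinely has Lipschitz gradient; granting the assumption exactly as stated, the remainder is the telescoping argument above.
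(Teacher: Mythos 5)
Your proposal is correct and follows essentially the same route as the paper's own proof: the descent lemma for the $\mu$-smooth objective, unbiasedness plus the bias--variance split to obtain the $\delta^2/m$ term, the condition $\eta\leq 1/\mu$ to keep the gradient coefficient at least $\eta/2$, telescoping, and substitution of the prescribed learning rate; indeed you spell out the independence argument behind $\mathbb{E}\|g_t-\nabla\mathcal{L}_t\|_F^2\leq \delta^2/m$ that the paper leaves implicit. One caveat worth recording: carried to the end, both your argument and the paper's yield $\frac{2\delta\sqrt{2\mu\mathcal{L}(\theta_0)}}{\sqrt{mT}}$, i.e.\ with $\mathcal{L}(\theta_0)$ in the numerator, so the bound displayed in the theorem statement (which places $\mathcal{L}(\theta_0)$ under the square root in the denominator) is a typo rather than what either derivation actually delivers.
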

\paragraph{Remark 2} Theorem~\ref{thm:1} implies that the learning rate depends on the mini-batch size and the number of iterations as $\eta = \OO(\sqrt{m/T})$ and the convergence rate is $\OO(1/\sqrt{mT})$, where $mT/N$ is the number of training epochs. 

We can obtain a scaling strategy for the learning rate. Let $\eta_0$, $m_0$ and $T_0$ denote a default configuration for training. If we change the mini-batch size as $m' = m_0/\alpha$, where $\alpha$ is a non-negative constant, and keep the same number of epochs for training (i.e., $T' = \alpha T_0$), the convergence rate remains the same. However, the learning rate becomes $\eta' = \OO(\sqrt{m'/T'}) = \eta_0/\alpha$. It shows that to obtain the same performance with a different mini-batch size, we have to rescale the learning rate with a corresponding factor, which is consistent with the observation in \cite{GoyalDGNWKTJH17}. Besides, the learning rate should be no larger than $1/\mu$, which means that the scaling strategy is inapplicable when the mini-batch size is too large.

\subsection{Recover Classification from Ranking}

Detection is to identify foreground objects from background. Therefore, the results from ranking have to be converted to classification. A straightforward way is to set a threshold for all ranking scores. However, the range of ranking scores from different images can vary due to the image-dependent mechanism, and should be calibrated for classification. We investigate the bound for the ranking scores of positive and negative examples as follows.
\begin{thm}\label{thm:rank}
When optimizing the ranking problem as
\[\forall j_+,j_-, \quad p_{j_+}-p_{j_-}\geq \gamma\]
it implies
\[\forall j_+,\quad p_{j_+}> \gamma; \quad \forall j_-,\quad  p_{j_-}\leq 1-\gamma\]
\end{thm}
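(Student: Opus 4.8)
The plan is to read the claim as a direct consequence of two facts: the feasibility constraint $p_{j_+}-p_{j_-}\geq\gamma$ that defines an ideally ranked image, and the boundedness of the predictions. Since every score is produced by a sigmoid, each candidate satisfies $0 < p < 1$; in particular $p_{j_-} > 0$ and $p_{j_+}\leq 1$. I would also invoke the generic assumption that the image contains at least one foreground and one background candidate, so that both the positive and negative families of pairs are nonempty and the margin constraint actually binds.

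To lower-bound the positives, I fix an arbitrary foreground index $j_+$ and pair it with any background index $j_-$ (one exists by the standing assumption). Rearranging the constraint gives $p_{j_+}\geq p_{j_-}+\gamma$, and since $p_{j_-}>0$ for a sigmoid output, this forces $p_{j_+}>\gamma$. Because $j_+$ was arbitrary, the bound holds for every positive example simultaneously.

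To upper-bound the negatives I argue symmetrically: fix an arbitrary background index $j_-$ and pair it with any foreground index $j_+$. The same constraint now reads $p_{j_-}\leq p_{j_+}-\gamma$, and substituting the trivial bound $p_{j_+}\leq 1$ yields $p_{j_-}\leq 1-\gamma$ for every negative example.

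The single point requiring care is the asymmetry in strictness between the two bounds. The strict inequality $p_{j_+}>\gamma$ is genuinely a consequence of the sigmoid never attaining $0$, i.e.\ of $p_{j_-}>0$; had I only assumed $p\geq 0$ I would obtain the weaker $p_{j_+}\geq\gamma$. The negative bound, by contrast, is stated non-strictly because I only need the coarse estimate $p_{j_+}\leq 1$. Beyond this bookkeeping there is no real analytic obstacle: the result is immediate once the range of $p$ is made explicit and the two extremes of that range are inserted into opposite sides of the margin inequality.
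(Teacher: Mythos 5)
Your proof is correct, and it fills in an argument the paper itself never writes down: Theorem~\ref{thm:rank} is stated without any proof in the paper (only Theorem~\ref{thm:1} receives a proof, in the appendix), so the natural reading is that the authors regarded exactly your argument as immediate. Your route---pairing each positive with an arbitrary negative and using $p_{j_-}>0$ from the sigmoid range to get the strict bound $p_{j_+}>\gamma$, then symmetrically using $p_{j_+}\leq 1$ to get $p_{j_-}\leq 1-\gamma$---is the intended one, and your explicit attention to the nonemptiness of both candidate sets and to the asymmetry in strictness between the two bounds is careful bookkeeping that the paper's terse statement glosses over.
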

Therefore, we can recover the standard classification criterion by setting a large margin.
\begin{cor}\label{cor:1}
If setting the margin of ranking as $\gamma = 0.5$, we can recover the classification criterion for ranking scores
\[\forall j_+,\quad p_{j_+}> 0.5;\quad \forall j_-,\quad p_{j_-}\leq 0.5\]
\end{cor}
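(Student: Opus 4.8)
The plan is to obtain Corollary~\ref{cor:1} as an immediate specialization of Theorem~\ref{thm:rank}. Theorem~\ref{thm:rank} asserts that whenever the ranking constraint $p_{j_+}-p_{j_-}\geq\gamma$ holds for every positive--negative pair, the scores are automatically confined to $p_{j_+}>\gamma$ and $p_{j_-}\leq 1-\gamma$. Since this holds for an arbitrary non-negative margin $\gamma$, the only remaining task is to select the value of $\gamma$ that collapses the two one-sided bounds onto the canonical sigmoid threshold $0.5$.

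First I would substitute $\gamma=0.5$ into the conclusion of Theorem~\ref{thm:rank}. The lower bound for positive examples becomes $p_{j_+}>\gamma=0.5$, and the upper bound for negative examples becomes $p_{j_-}\leq 1-\gamma=1-0.5=0.5$, which is exactly the claimed classification criterion. The key structural observation is that $\gamma=0.5$ is the unique fixed point of the map $\gamma\mapsto 1-\gamma$, so it is the only margin for which the positive lower bound $\gamma$ and the negative upper bound $1-\gamma$ meet at a single shared value; any other choice would leave either a gap or an overlap between the two regions.

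Next I would check that this shared value of $0.5$ partitions the admissible score range cleanly. Because Theorem~\ref{thm:rank} supplies a \emph{strict} inequality on the positive side and a \emph{non-strict} inequality on the negative side, the two regions $\{p>0.5\}$ and $\{p\leq 0.5\}$ are disjoint and jointly exhaust $[0,1]$: no score can be simultaneously consistent with a positive and a negative assignment, and every score lies on exactly one side of the threshold. This is precisely the decision rule of a standard sigmoid classifier thresholded at $0.5$, so the ranking formulation recovers classification without any post-hoc calibration.

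I do not expect a genuine obstacle in the Corollary itself, since the substantive content resides in Theorem~\ref{thm:rank} (which I am assuming) and ultimately rests on the fact that sigmoid outputs lie in $[0,1]$. The only point requiring care is the bookkeeping of the open/closed inequalities at the boundary: I would verify that the strictness inherited from Theorem~\ref{thm:rank} lines up so that $0.5$ is assigned consistently (here, to the negative side), guaranteeing an exhaustive and non-overlapping split. Everything beyond that is a one-line arithmetic substitution.
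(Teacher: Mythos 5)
Your proposal is correct and takes essentially the same route as the paper: the corollary is obtained by directly substituting $\gamma=0.5$ into the conclusion of Theorem~\ref{thm:rank}, giving $p_{j_+}>0.5$ and $p_{j_-}\leq 1-0.5=0.5$. Your additional remarks on $0.5$ being the fixed point of $\gamma\mapsto 1-\gamma$ and on the strict/non-strict inequalities yielding a disjoint, exhaustive split are sound elaborations of the same one-line argument.
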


With these appropriate settings, our final objective for detection can be summarized as
\[\min \sum_i^N \tau \ell_{\mathrm{DR}}^i + \ell_{\mathrm{Reg}}^i\]
where $\ell_{\mathrm{Reg}}$ is the original regression loss in RetinaNet and we keep it unchanged. $\tau$ is the parameter for balancing the weights between classification and regression. We fix it as $\tau=4$ in the experiments.

\section{Experiments}
\label{sec:exp}

\subsection{Implementation Details}
We evaluate the proposed DR loss on COCO data set~\cite{LinMBHPRDZ14}, which contains about $118k$ images for training, $5k$ images for validation, and $40k$ images for test. To focus on the comparison of loss functions, we employ the structure of RetinaNet~\cite{LinGGHD17} as the backbone and only substitute the corresponding focal loss. For a fair comparison, we implement our algorithm in a public codebase~\footnote{https://github.com/facebookresearch/maskrcnn-benchmark}. Besides, we train the model with the same configuration as RetinaNet. Specifically, the model is learned with SGD on $8$ GPUs and the mini-batch size is set as $16$ where each GPU can hold $2$ images at each iteration. Most of experiments are trained with $90k$ iterations that is denoted as ``$1\times$''. The initial learning rate is $0.01$ and is decayed by a factor of $10$ after $60k$ iterations and then $80k$ iterations. For anchor density, we apply the same setting as in \cite{LinGGHD17}, where each location has $3$ scales and $3$ aspect ratios. The standard COCO evaluation criterion is used to compare the performance of different methods.

\subsection{Parameters in DR Loss}
From the definition in Eqn.~\ref{eq:final}, DR loss has three parameters $\lambda_+$, $\lambda_-$ and $L$ to be tuned. $\lambda_+$ and $\lambda_-$ regularize the distribution of scores for positive and negative examples, respectively. $L$ controls the smoothness of the loss function. The margin $\gamma$ is fixed as $0.5$ according to Corollary~\ref{cor:1}. Compared with the focal loss~\cite{LinGGHD17}, DR loss has one more parameter. However, RetinaNet lacks optimizing the relationship between positive and negative distributions, and it has an additional parameter to initialize the output probability of the classifier (i.e., $0.01$) to fit the distribution of background. In contrast, we initialize the probability of the sigmoid function at $0.5$, which is the default threshold for binary classification scenario without any prior knowledge. It verifies that the proposed DR loss can handle the imbalance problem better. Consequently, DR loss roughly has the same number of parameters as that in focal loss.

We will have the ablation study on these parameters to illustrate the influence in the next subsections. Note that RetinaNet applies Feature Pyramid Network (FPN)~\cite{LinDGHHB17} to obtain multiple scale features. To compute DR loss in one image, we collect anchors from multiple pyramid levels and obtain a single distribution for positive and negative anchors, respectively.

\subsection{Effect of Parameters}

We conduct ablation experiments to evaluate the effect of multiple parameters on the \textit{minival} set. All experiments in this subsection are implemented with a single image scale of $800$ for training and test. ResNet-$50$~\cite{HeZRS16} is applied as the backbone for comparison. Only horizontal flipping is adopted as the data augmentation in this subsection. 

\paragraph{Effect of $\lambda_+$ and $\lambda_-$:}
First, we evaluate the effect of $\lambda_+$ and $\lambda_-$ in Eqn.~\ref{eq:mexample}. These parameters constrain the freedom of the derived distributions. As illustrated in Fig.~\ref{fig:ndist}, variances of distributions will have the impact on selecting appropriate weights for regularizers. We investigate the variance of positive and negative anchors, and observe that the standard deviation of positive anchors is about $10$ times larger than that of negative ones. So we roughly set $\lambda_+=1$ and $\lambda_-=0.1$ and fine-tune them as $\lambda_+ = 1/\log(h_+)$ and $\lambda_- = 0.1/\log(h_-)$. It is easy to show that this strategy is equivalent to fixing $\lambda_+$ and $\lambda_-$ as $1$ and $0.1$, and changing the base in the definition of the KL-divergence as $\mathrm{KL}(\q||\oo)= \sum_j q_j\log_{h}\frac{q_j}{o_j}$.

We vary $h_+$ and $h_-$ and summarize the results in Table~\ref{ta:lambda}. First, we observe that the default setting with $\lambda_+=1$ and $\lambda_-=0.1$ can outperform focal loss by $1\%$ ,which demonstrates the effectiveness of the proposed DR loss. Second, the performance of our loss is quite stable in a reasonable range. Finally, the distribution of positive anchors is more sensitive to a small $\lambda_+$, which is consistent with the illustration in Fig.~\ref{fig:ndist}. We keep the best settings in the following experiments.

\begin{table}[!ht]
\centering
\begin{tabular}{ll"lllllll}
$h_+$&$h_-$&AP&AP$_{50}$&AP$_{75}$&AP$_{S}$&AP$_{M}$&AP$_{L}$\\\thickhline
e&e&37.1&56.1& 39.4&19.7&40.9& 50.1\\
\textbf{e}&\textbf{3.5}&\textbf{37.4}&\textbf{56.0}&\textbf{40.0}&\textbf{20.8}&\textbf{41.2}&\textbf{50.5}\\
e&5.5&37.2&55.7&40.0&19.6&41.2&50.4\\
e&20&36.6&54.7&39.4&19.6&40.5&50.3\\
5.5&3.5&36.7&55.2&39.4&19.9&40.4&50.0\\
20&3.5&35.6&54.2&38.1&19.2&39.7&48.3
\end{tabular}
\caption{Comparison of $\lambda_+$ and $\lambda_-$ as in Eqn.~\ref{eq:mexample}. Note that we tune the parameters in the form of $\lambda_+ = 1/\log(h_+)$ and $\lambda_- = 0.1/\log(h_-)$. We adopt $1\times$ iterations and ResNet-50 as the backbone in training. Performance on the \textit{minival} is reported for the ablation study.}\label{ta:lambda}
\end{table}

\paragraph{Effect of Smoothness:}
$L$ controls the smoothness of the loss function in Eqn.~\ref{eq:Lloss}. We compare the model with different $L$'s in Table~\ref{ta:L}. We also include the results for the quadratic loss function in Eqn.~\ref{eq:qloss} with different $\rho$'s for comparison. The original hinge loss is denoted as ``Hinge''. First, all smooth loss functions outperform hinge loss. It confirms that smoothness is important for non-convex optimization. Second, the smooth variants of hinge loss surpass focal loss with a significant margin. It is because that DR loss leverages the information from an image rather than an anchor, which can handle the imbalance issue better. Since quadratic loss and logistic loss have the similar performance, we adopt the logistic loss with $L=6$ in the rest experiments.

\begin{table}[!ht]
\centering
\begin{tabular}{l"lllllll}
Loss&AP&AP$_{50}$&AP$_{75}$&AP$_{S}$&AP$_{M}$&AP$_{L}$\\\thickhline
Focal&36.1&55.0&38.7&19.5&39.5&49.0\\
Hinge&35.8&54.0&38.3&19.3&39.5&47.6\\
$\rho=0.2$&36.9&55.5&39.5&21.1&40.7&49.6\\
$\rho=0.5$&37.2&56.0&39.8&21.1&41.1&50.4\\
$L=4$&37.2&55.9&39.9&20.3&41.0&50.3\\
\textbf{$L=6$}&\textbf{37.4}&\textbf{56.0}&\textbf{40.0}&\textbf{20.8}&\textbf{41.2}&\textbf{50.5}\\
$L=8$&37.1&55.7&39.7&19.5&41.2&50.5\\
$L=10$&36.8&55.4&39.4&20.0&40.7&50.0
\end{tabular}
\caption{Comparison of different loss functions. $\rho$ and $L$ are from Eqn.~\ref{eq:qloss} and Eqn.~\ref{eq:Lloss}, respectively.}\label{ta:L}
\end{table}

\begin{table}[!ht]
\centering
\begin{tabular}{l"lllllll}
Pair &AP&AP$_{50}$&AP$_{75}$&AP$_{S}$&AP$_{M}$&AP$_{L}$\\\thickhline
All&12.9&23.0&12.6&8.8&16.7&15.0\\
NegOnly&37.0&55.5&39.5&19.8&40.7&50.5\\
DR&\textbf{37.4}&\textbf{56.0}&\textbf{40.0}&\textbf{20.8}&\textbf{41.2}&\textbf{50.5}
\end{tabular}
\caption{Comparison of different pairing strategies.}\label{ta:pair}
\end{table}

\paragraph{Effect of Pairing Strategy:}
In DR loss, we propose to rank a single pair consisting of expectations from positive and negative distributions. To evaluate the pairing strategy, we compare it to different strategies for ranking. Specifically, we denote optimizing all pairs in Eqn.~\ref{eq:rank} as ``All'', which is corresponding to the standard ranking problem. We also include a variant of DR loss as ``NegOnly'' that pushes distributions for negative anchors only. The objective of NegOnly on a single image can be written as
\begin{eqnarray*}
\min_\theta \frac{1}{n_+}\sum_{j_+}\ell_{\mathrm{logistic}}(\hat{P}_{-}-p_{j_+}+\gamma)
\end{eqnarray*}
The result of optimizing the worst-case scenario in Eqn.~\ref{eq:bound} is not included since training with that fails to obtain the meaningful result.

The comparison is summarized in Table~\ref{ta:pair}. As illustrated in Section \ref{sec:orank}, the conventional ranking algorithm suffers from the intra-class imbalance in the hardness of negative anchors, which results in the poor performance for detection. By mitigating this issue with the proposed strategy, NegOnly can outperform focal loss. It confirms that handling the imbalance in the negative anchors is important and the proposed strategy can serve the purpose well. Finally, we observe that a tailored distribution for positive anchors can further improve the performance as in DR loss.

\begin{table*}[!ht]
\centering
\begin{tabular}{l|l|lll|lll}
Methods&Backbone&AP&AP$_{50}$&AP$_{75}$&AP$_{S}$&AP$_{M}$&AP$_{L}$\\\thickhline
\textit{two-stage detectors}&&&&&&&\\
Faster R-CNN+++~\cite{HeZRS16}&ResNet-101-C4&34.9&55.7&37.4&15.6&38.7&50.9\\
Faster R-CNN w FPN~\cite{LinDGHHB17}&ResNet-101-FPN&36.2&59.1&39.0&18.2&39.0&48.2\\
Deformable R-FCN~\cite{DaiQXLZHW17}&Aligned-Inception-ResNet&37.5&58.0&40.8&19.4&40.1&52.5\\
Mask R-CNN~\cite{HeGDG17}&ResNet-101-FPN&38.2&60.3&41.7&20.1&41.1&50.2\\\hline
\textit{one-stage detectors}&&&&&&&\\
YOLOv2~\cite{RedmonF17}&DarkNet-19&21.6&44.0&19.2&5.0&22.4&35.5\\
SSD513~\cite{LiuAESRFB16}&ResNet-101-SSD&31.2&50.4&33.3&10.2&34.5&49.8\\
AP-Loss~\cite{ChenLLSWD0HZ19}&ResNet-101-FPN&37.4&58.6&40.5&17.3&40.8&51.9\\
RetinaNet~\cite{LinGGHD17}&ResNet-101-FPN&39.1&59.1&42.3&21.8&42.7&50.2\\
RetinaNet~\cite{LinGGHD17}&ResNeXt-32x8d-101-FPN&40.8&61.1&44.1&24.1&44.2&51.2\\
CornerNet~\cite{LawD18}&Hourglass-104&40.5&56.5&43.1&19.4&42.7&53.9\\
FSAF~\cite{ZhuHS19}&ResNet-101-FPN&40.9&61.5&44.0&24.0&44.2&51.3\\
FCOS~\cite{tian2019fcos}&ResNet-101-FPN&41.5&60.7&45.0&24.4&44.8&51.6\\
FCOS~\cite{tian2019fcos}&ResNeXt-32x8d-101-FPN&42.7&62.2&46.1&26.0&45.6&52.6\\
\hline
Dr. Retina&ResNet-101-FPN&41.7&60.9&44.8&23.5&44.9&53.1\\
Dr. Retina&ResNeXt-32x8d-101-FPN&43.1&62.8&46.4&25.6&46.2&54.0\\
\hline
Dr. Retina (\textit{multi-scale test})&ResNet-101-FPN&43.4&62.1&47.0&26.7&46.1&55.0\\
Dr. Retina (\textit{multi-scale test})&ResNeXt-32x8d-101-FPN&\textbf{44.7}&\textbf{63.8}&\textbf{48.7}&\textbf{28.2}&\textbf{47.4}&\textbf{56.2}
\end{tabular}
\caption{Comparison with the state-of-the-art methods on COCO \textit{test-dev} set.}\label{ta:sota}
\end{table*}

\paragraph{Effect of DR Loss:} 
To illustrate the effectiveness of DR loss, we collect the confidence scores of anchors from all images in \textit{minival} and compare the empirical probability density in Fig.~\ref{fig:illdr}. We include the results from cross entropy loss and focal loss in the comparison.

\begin{figure}[!ht]
\centering
\begin{minipage}{1.6in}
\centering
\includegraphics[width=1.5in]{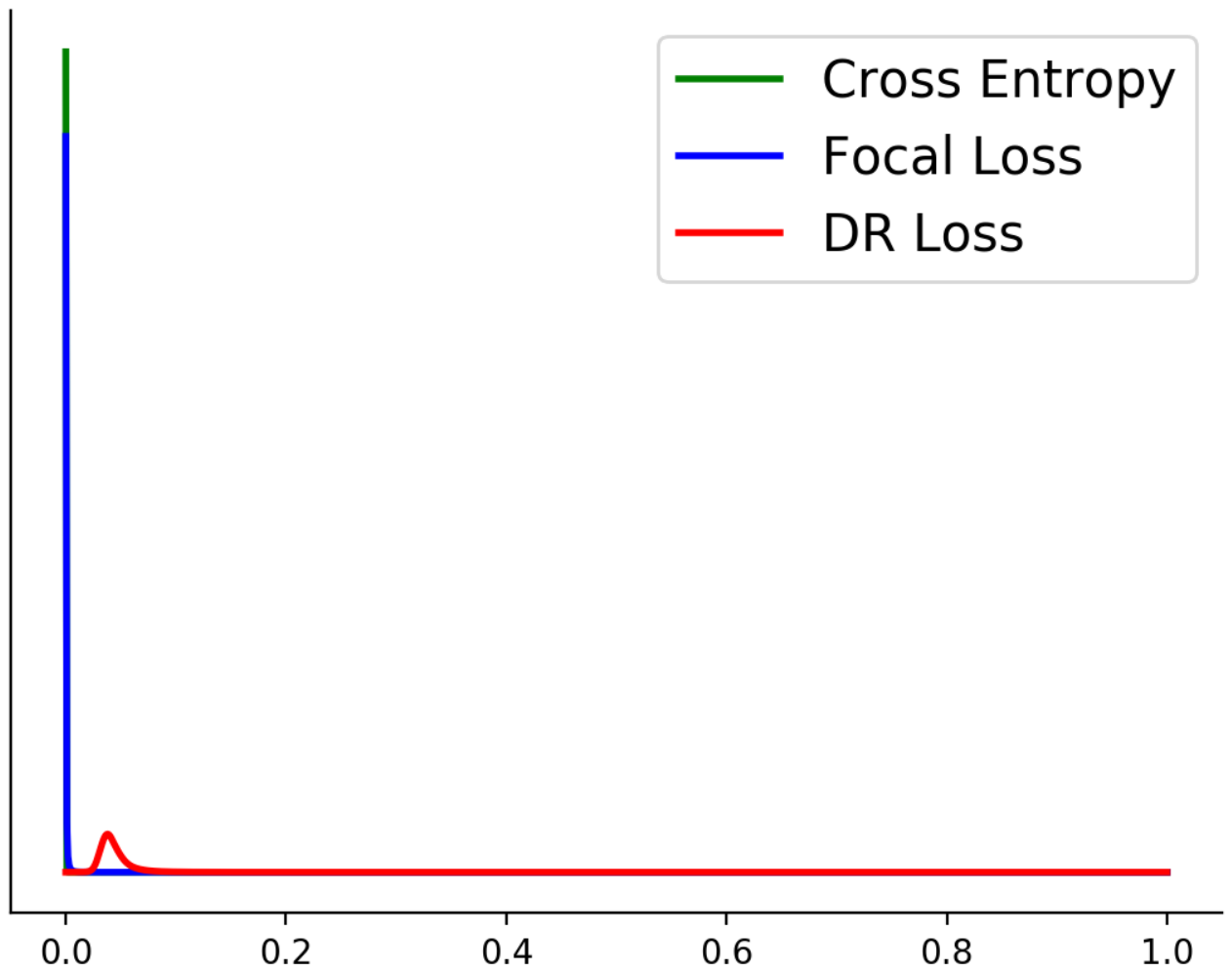}
\mbox{\footnotesize (a) Negative Anchors Distribution  }
\end{minipage}
\begin{minipage}{1.6in}
\centering
\includegraphics[width=1.5in]{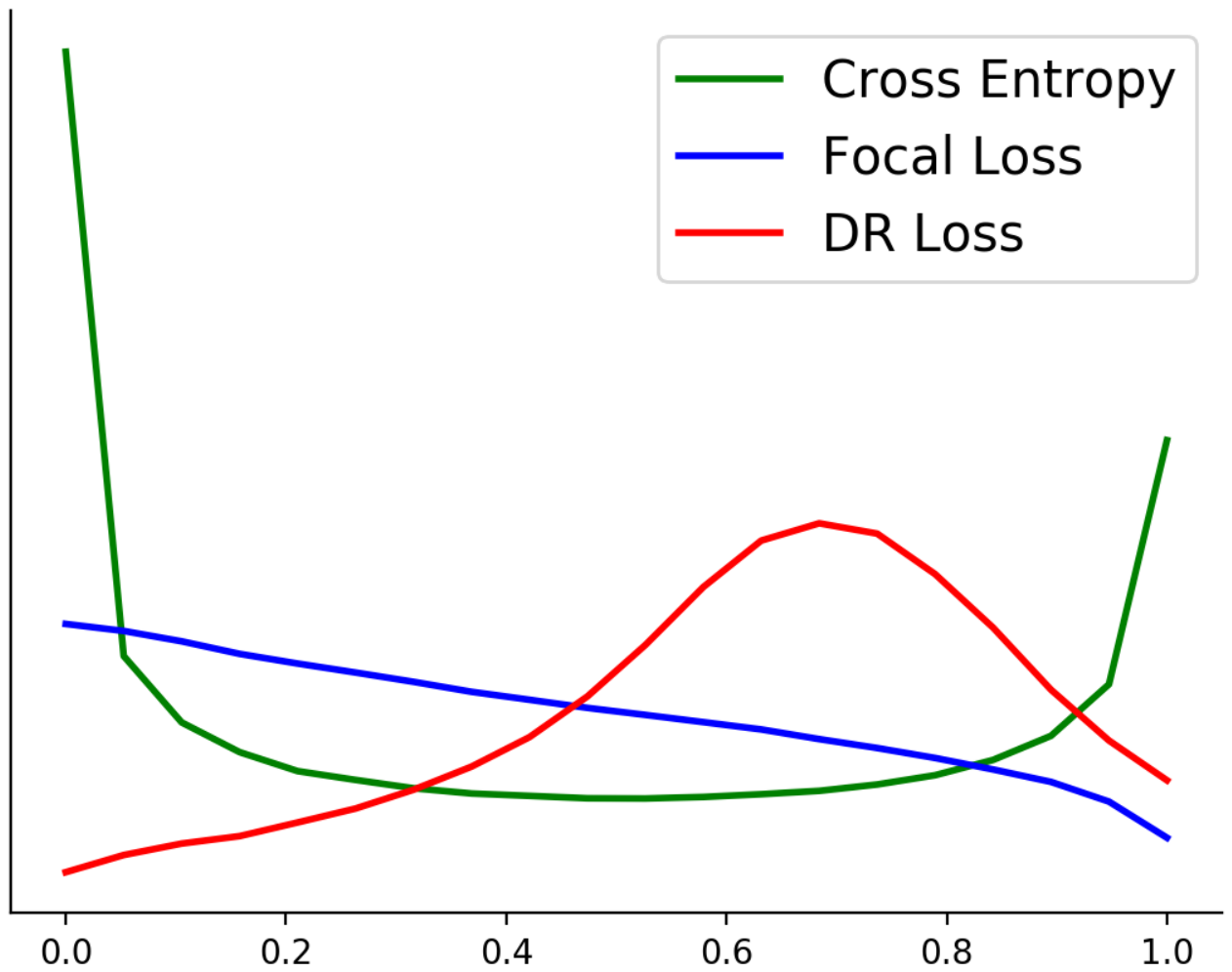}
\mbox{\footnotesize (b) Positive Anchors Distribution  }
\end{minipage}
\caption{Illustration of empirical PDF of distributions that are computed from images in the \textit{minival}.\label{fig:illdr}}
\end{figure}

First, we observe that most of examples have an extremely low confidence after minimizing cross entropy loss. It is because the number of negative examples overwhelms that of positive ones and it will classify most of examples to be negative to obtain a small loss as demonstrated in Eqn.~\ref{eq:imba}. Second, focal loss is better than cross entropy loss by improving the distribution of positive anchors. However, the expectation of the foreground distribution is still close to that of background, and it interprets the fact that focal loss has to initialize the probability of the classifier to be small (i.e., $0.01$). Compared to cross entropy and focal loss, DR loss improves the foreground distribution significantly. By optimizing our ranking loss with a large margin, the expectation of the positive anchors is larger than $0.5$ while that of background is smaller than $0.1$. It confirms that DR loss can address the imbalance between classes well. Besides, the hardness of negative anchors with DR loss is more balanced than that with cross entropy or focal loss. It verifies that with the image-dependent mechanism, DR loss can handle the intra-class imbalance in background examples and focus on the hard negative examples appropriately. More analysis can be found in the appendix.

\subsection{Comparison with State-of-the-Art }
We denote RetinaNet with DR loss as ``Dr. Retina'' and compare it to the state-of-the-art detectors on COCO \textit{test-dev} set. We follow the setting in \cite{LinGGHD17} to increase the number of training iterations to $2\times$, which contains $180k$ iterations, and applies scale jitter in $[640,800]$ as the additional data augmentation for training. Note that we still use a single image scale and a single crop for test as above. Table~\ref{ta:sota} summarizes the comparison for Dr. Retina. With ResNet-101 as the backbone, we can observe that Dr. Retina improves mAP from $39.1\%$ to $41.7\%$. It illustrates that DR loss can explore the imbalance issue in detection more sufficiently than focal loss. Equipped with ResNeXt-32x8d-101~\cite{XieGDTH17} and $1.5\times$ iterations (i.e., $135k$ iterations), the performance of Dr. Retina can achieve $43.1\%$ as a one-stage detector on COCO detection task without bells and whistles. Note that we only replace focal loss with DR loss to obtain the significant gain, which implies that DR loss can be a good substitute of focal loss. Finally, the multi-scale test with scales from $\{400,500,600,700,800,900,1000,1100,1200\}$ can further improve the performance as expected.

\section{Conclusion}
\label{sec:conclusion}
In this work, we introduce the distributional ranking loss to address the imbalance challenge in one-stage object detection. We first convert the original classification problem to a ranking problem, which balances the positive and negative classes. After that, we propose to push the original distributions to the decision boundary and rank the expectations of derived distributions in lieu of original examples to focus on the hard examples, which balances the hardness of background examples. Experiments on COCO verify the effectiveness of the proposed loss function.

\bibliographystyle{ieee_fullname}
\bibliography{dr}

\appendix
\section{Gradient of DR Loss}
We have the DR loss as
\begin{eqnarray*}
\min_\theta \mathcal{L}_{\mathrm{DR}}(\theta)=\sum_i^N\ell_{\mathrm{logistic}}(\hat{P}_{i,-}-\hat{P}_{i,+}+\gamma)
\end{eqnarray*}
where
\begin{eqnarray*}
\ell_{\mathrm{logistic}}(z) = \frac{1}{L}\log(1+\exp(Lz))
\end{eqnarray*}
and
\begin{eqnarray*}
\hat{P}_{i,-} &=& \sum_{j-}^{n_-}\frac{1}{Z_-}\exp(\frac{p_{i,j_-}}{\lambda_-})p_{i,j_-} =  \sum_{j-}^{n_-}q_{i,j_-}p_{i,j_-}\\
\hat{P}_{i,+}&=& \sum_{j_+}^{n_+}\frac{1}{Z_+}\exp(\frac{-p_{i,j_+}}{\lambda_+})p_{i,j_+} = \sum_{j_+}^{n_+}q_{i,j_+}p_{i,j_+}
\end{eqnarray*}

It looks complicated but its gradient is easy to compute. Here we give the detailed gradient.
For $p_{i,j_-}$, we have
\begin{eqnarray*}
&&\frac{\partial \mathcal{L}}{\partial p_{i,j_-}} = \frac{1}{1+\exp(-Lz)}\frac{\partial z}{\partial p_{i,j_-}}\\
&& = \frac{q_{i,j_-}}{1+\exp(-Lz)}(1+\frac{p_{i,j_-}}{\lambda_-} - \frac{1}{\lambda_-}(\sum_{j_-} q_{i,j_-}p_{i,j_-}))
\end{eqnarray*}
where $z = \hat{P}_--\hat{P}_++\gamma$.

For $p_{i,j_+}$, we have
\begin{eqnarray*}
&&\frac{\partial \mathcal{L}}{\partial p_{i,j_+}} = \frac{1}{1+\exp(-Lz)}\frac{\partial z}{\partial p_{i,j_+}}\\
&& = \frac{q_{i,j_+}}{1+\exp(-Lz)}(-1+\frac{p_{i,j_+}}{\lambda_+}-\frac{1}{\lambda_+}(\sum_{j_+} q_{i,j_+}p_{i,j_+}))
\end{eqnarray*}

\begin{table*}[!ht]
\centering
\begin{tabular}{l"llllll|llllll}
&\multicolumn{6}{c|}{Focal Loss}&\multicolumn{6}{c}{DR Loss}\\\hline
Threshold&AP&AP$_{50}$&AP$_{75}$&AP$_{S}$&AP$_{M}$&AP$_{L}$&AP&AP$_{50}$&AP$_{75}$&AP$_{S}$&AP$_{M}$&AP$_{L}$\\\thickhline
0.05&36.1&55.0&38.7&19.5&39.5&49.0 &37.4&56.0&40.0&20.8&41.2&50.5\\
0.1&36.1&54.9&38.7&19.4&39.4&49.0 &37.4&56.0&40.0&20.8&41.2&50.5\\
0.2&35.4&53.4&38.2&18.3&38.7&48.6 &37.4&56.0&40.0&20.8&41.2&50.5\\
0.3&33.9&50.2&37.0&16.2&37.1&47.6 &37.4&56.0&40.0&20.8&41.2&50.5\\
0.4&31.6&45.8&35.0&14.1&34.4&45.2 &37.3&55.9&40.0&20.7&41.2&50.4\\
0.5&28.4&39.7&31.7&10.5&30.5&42.1 &37.2&55.6&39.8&20.1&41.0&50.3
\end{tabular}
\caption{Comparison of different threshold.}\label{ta:thres}
\end{table*}

\begin{table*}[!ht]
\centering
\begin{tabular}{l"llllll|llllll}
&\multicolumn{6}{c|}{Focal Loss~\cite{LinGGHD17}}&\multicolumn{6}{c}{DR Loss}\\\hline
scale&AP&AP$_{50}$&AP$_{75}$&AP$_{S}$&AP$_{M}$&AP$_{L}$&AP&AP$_{50}$&AP$_{75}$&AP$_{S}$&AP$_{M}$&AP$_{L}$\\\thickhline
400&30.5&47.8&32.7&11.2&33.8&46.1&32.4&49.9&34.5&11.7&34.8&48.0\\
500&32.5&50.9&34.8&13.9&35.8&46.7&34.5&52.6&36.6&14.7&36.9&48.9\\
600&34.3&53.2&36.9&16.2&37.4&47.4&36.1&54.6&38.7&17.4&38.5&49.2\\
700&35.1&54.2&37.7&18.0&39.3&46.4&37.1&55.8&39.7&18.9&39.8&49.2\\
800&35.7&55.0&38.5&18.9&38.9&46.3&37.6&56.4&40.3&20.1&40.5&48.9
\end{tabular}
\caption{Comparison of different input scales. We adopt $1\times$ iterations and ResNet-50 as the backbone in training. Results on the \textit{test-dev} are reported.}\label{ta:scale}
\end{table*}

\section{Proof of Theorem~1}
\begin{proof}
First, we give the definition of smoothness
\begin{definition}
A function $F$ is called $\mu$-smoothness w.r.t. a norm $\|\cdot\|$ if there is a constant $\mu$ such that for any $\theta$ and $\theta'$, it holds that
\[F(\theta')\leq F(\theta)+\langle \nabla F(\theta),\theta'-\theta\rangle+\frac{\mu}{2}\|\theta'-\theta\|^2 \]
\end{definition}

We assume that the loss in Eqn.~9 is $\mu$-smoothness, then we have
\begin{eqnarray*}
&&E[\LL(\theta_{t+1})] \leq E[\LL(\theta_{t})+\langle \nabla \LL(\theta_t), \theta_{t+1} - \theta_t\rangle\\
&&\quad\quad\quad\quad\quad\quad+\frac{\mu}{2}\|\theta_{t+1}-\theta_t\|_F^2]\\
&&= E[\LL(\theta_{t})+\langle \nabla \LL(\theta_t),  - \frac{\eta}{m}\sum_{s=1}^m \nabla \ell_t^s \rangle\\
&&\quad\quad\quad\quad\quad\quad+\frac{\mu\eta^2}{2}\|\frac{1}{m}\sum_{s=1}^m \nabla \ell_t^s\|_F^2]
\end{eqnarray*}
According to the definition, we have
\[\forall s, E[\nabla \ell_t^s]=\nabla \LL(\theta_t)\]
If we assume that the variance is bounded as
\[\forall s, \|\nabla \ell_t^s-\nabla \LL_t\|_F\leq \delta\]
then we have
\begin{eqnarray*}
&&E[\LL(\theta_{t+1})] \leq E[\LL(\theta_{t})-\eta \|\nabla \LL_t\|_F^2\\
&&\quad\quad\quad\quad\quad\quad+\frac{\mu\eta^2}{2}\|\frac{1}{m}\sum_{s=1}^m \nabla \ell_t^s-\nabla \LL_t+\nabla \LL_t\|_F^2]\\
&&\leq E[\LL(\theta_{t})-\eta\|\nabla \LL_t\|_F^2+\frac{\mu\eta^2}{2}(\frac{\delta^2}{m}+\|\nabla \LL_t\|_F^2)\nonumber
\end{eqnarray*}

Therefore, we have
\begin{eqnarray*}
(\eta - \frac{\mu\eta^2}{2}) \|\nabla \LL(\theta_t)\|_F^2\leq E[\LL(\theta_t)] - E[\LL(\theta_{t+1})]+\frac{\mu\eta^2\delta^2}{2m}
\end{eqnarray*}
By assuming $\eta\leq \frac{1}{\mu}$ and adding $t$ from $1$ to $T$, we have
\begin{eqnarray*}
\sum_t \|\nabla \LL(\theta_t)\|_F^2 \leq  \frac{2\LL(\theta_0)}{\eta}+\frac{\mu\eta T\delta^2}{m}
\end{eqnarray*}
We finish the proof by letting 
\[\eta = \frac{\sqrt{2m\LL(\theta_0)}}{\delta\sqrt{\mu T}}\]
\end{proof}

\section{Additional Experiments}
\paragraph{Effect of DR Loss:}
We illustrate the empirical PDF of foreground and background from DR loss in Fig.~\ref{fig:illdr2}. Fig.~\ref{fig:illdr2} (a) shows the original density of foreground and background. To make the results more explicit, we decay the density of background by a factor of $10$ and demonstrate the result in Fig.~\ref{fig:illdr2} (b). It is obvious that DR loss can separate the foreground and background with a large margin in the imbalanced scenario. 

\begin{figure}[!ht]
\centering
\begin{minipage}{1.4in}
\centering
\includegraphics[width=1.3in]{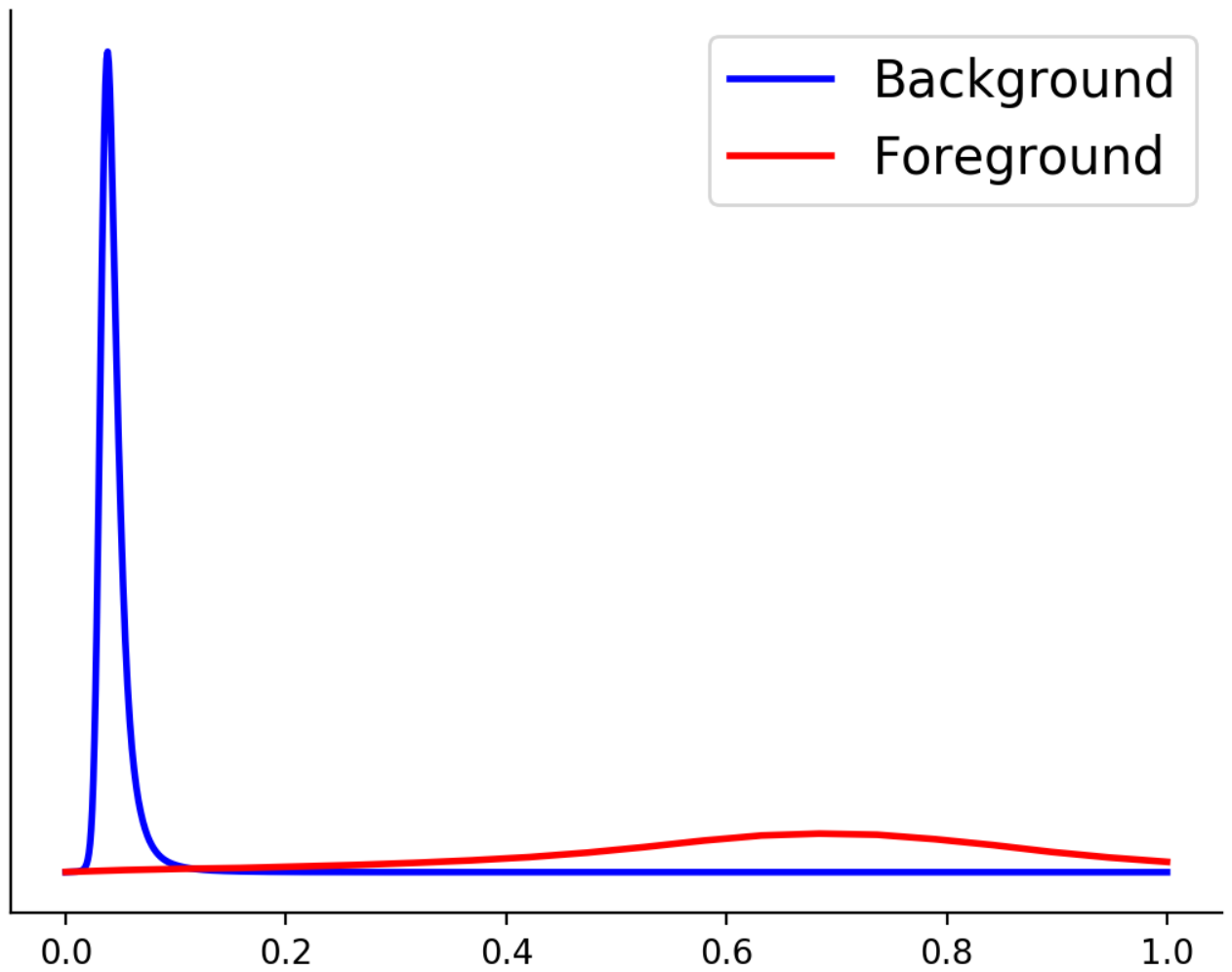}
\mbox{\footnotesize (a) Original Density  }
\end{minipage}
\begin{minipage}{1.4in}
\centering
\includegraphics[width=1.3in]{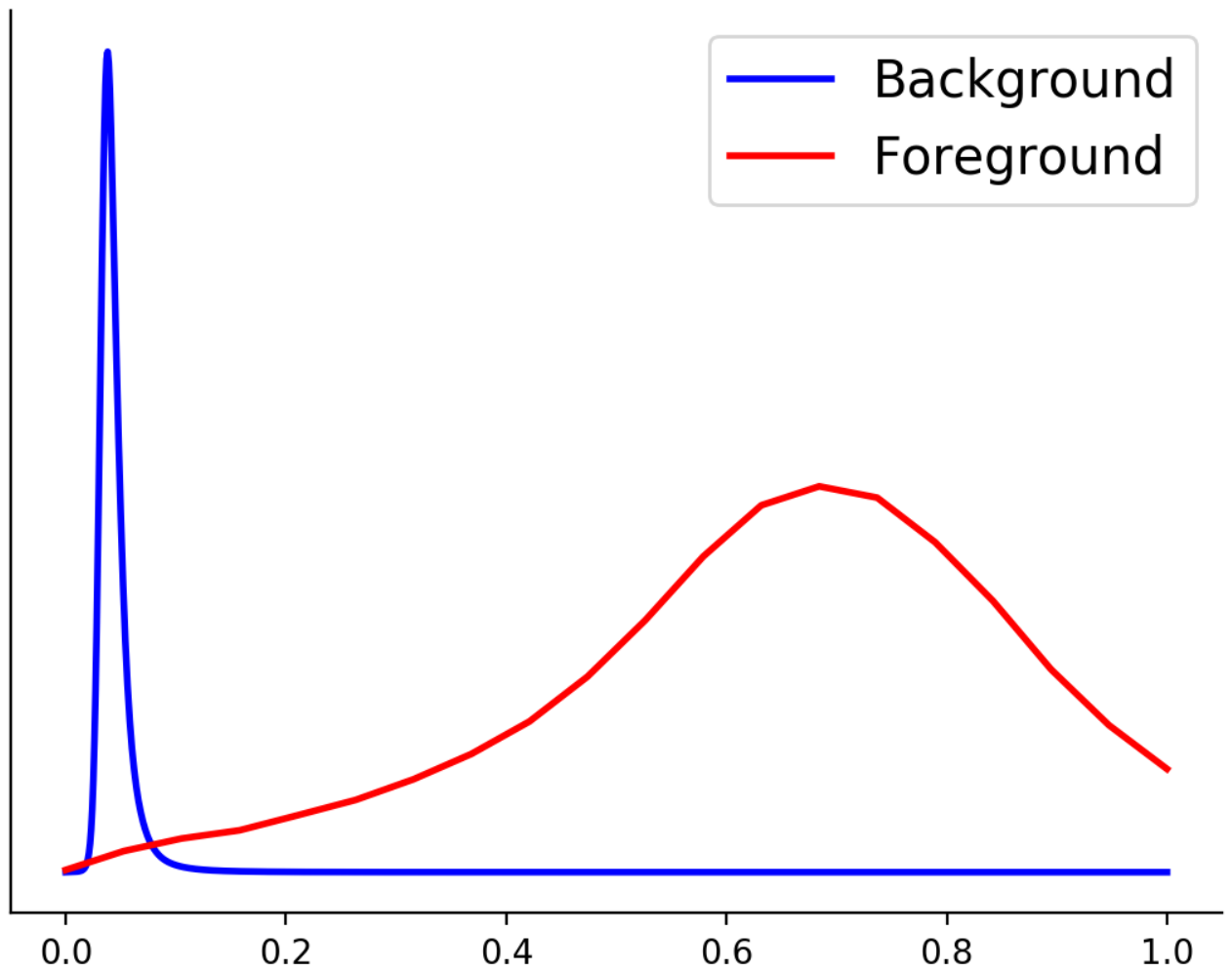}
\mbox{\footnotesize (b) Decayed Density }
\end{minipage}
\caption{Illustration of empirical PDF of distributions from DR loss.\label{fig:illdr2}}
\end{figure}

\paragraph{Effect of Large Margin:}
Before non-maximum suppression (NMS), the candidates with low confidence will be filtered to accelerate detection. Since the distribution of foreground from focal loss is close to that of background as illustrated in Fig.~\ref{fig:illdr}, a small threshold as $0.05$ is adopted to eliminate negative examples. The proposed loss function optimizes the distributions with a large margin and can be robust to the selection of the threshold. Table~\ref{ta:thres} demonstrates the performance with different thresholds. It is obvious that the performance of DR loss keeps almost the same while that of focal loss degrades significantly when increasing the threshold.

\paragraph{Effect of Image Scale:}
We tune the parameters of DR loss with a single input scale of $800$ but the parameters are robust to different input scales. We follow the settings in the ablation study and Table~\ref{ta:scale} compares the performance on \textit{test-dev} with scales varied in $\{400,500,600,700,800\}$. Note that the maximal size of images is also changed with a corresponding factor. We report the results of focal loss from \cite{LinGGHD17}. Evidently, DR loss can consistently improve the performance over focal loss by about $2\%$. It demonstrates that the proposed loss function is not sensitive to the scale of input images.

\paragraph{Comparison on PASCAL:}
Finally, we evaluate the proposed DR loss on a different data set: PASCAL VOC2007~\cite{pascal-voc-2007}, which contains $9,963$ images and $20$ classes. We adopt the same configurations for RetinaNet as in the ablation study and the same parameters as those on COCO for DR loss and focal loss. We change the initial learning rate to $0.008$ and it is decayed at $6,250$ iterations, where the total number of iterations is $8,750$ as suggested by the codebase. Other training settings are the same as the pipeline for COCO. The detector is trained with the training and validation sets, and Table~\ref{ta:pascal} shows the comparison on the test data. We can observe that with the same parameters on a different task, our method can outperform focal loss with a significant margin. It demonstrates that the proposed loss function can be applicable for different tasks.

\begin{table}[ht]
\centering
\begin{tabular}{l"llll}
Loss&AP&AP$_{50}$&AP$_{75}$\\\thickhline
Focal&39.5&67.2&40.8\\
DR&41.2&68.6&42.6
\end{tabular}
\caption{Comparison on VOC2007. Results on the \textit{test} are reported.}\label{ta:pascal}
\end{table}

\end{document}